\DeclareMathOperator*{\diag}{diag}
\DeclareMathOperator*{\tr}{tr}
\DeclareMathOperator*{\pre}{pre}
\DeclareMathOperator*{\rec}{rec}
\newtheorem{observation}{Observation}
\title{The SpectACl of Nonconvex Clustering: A Spectral Approach to Density-Based Clustering}
\author{Sibylle Hess\\
TU Dortmund University\\
Computer Science Faculty\\
Artificial Intelligence LS VIII\\
D-44221 Dortmund, Germany\\
\url{sibylle.hess@tu-dortmund.de}
\And
Wouter Duivesteijn\\
Technische Universiteit Eindhoven\\
Faculteit Wiskunde \& Informatica\\
Data Mining Group\\
Eindhoven, the Netherlands\\
\url{w.duivesteijn@tue.nl}
\And
Philipp Honysz\and Katharina Morik\\
TU Dortmund University\\
Computer Science Faculty\\
Artificial Intelligence LS VIII\\
D-44221 Dortmund, Germany\\
\url{philipp.honysz@tu-dortmund.de}\\
\url{katharina.morik@tu-dortmund.de}
}
\begin{document}
\maketitle
\begin{abstract}
When it comes to clustering nonconvex shapes, two paradigms are used to find the most suitable clustering: minimum cut and maximum density. The most popular algorithms incorporating these paradigms are Spectral Clustering and DBSCAN. Both paradigms have their pros and cons. While minimum cut clusterings are sensitive to noise, density-based clusterings have trouble handling clusters with varying densities. In this paper, we propose \textsc{SpectACl}: a method combining the advantages of both approaches, while solving the two mentioned drawbacks. Our method is easy to implement, such as Spectral Clustering, and theoretically founded to optimize a proposed density criterion of clusterings. Through experiments on synthetic and real-world data, we demonstrate that our approach provides robust and reliable clusterings. 
\end{abstract}

\section{Introduction}
Despite being one of the core tasks of data mining, and despite having been around since the $1930$s \cite{1932Driver,1935Klimek,1939Tryon}, the question of clustering has not yet been answered in a
manner that doesn't come with innate disadvantages.  The paper that you are currently reading will also
not provide such an answer.  Several advanced solutions to the clustering problem have become quite 
famous, and justly so, for delivering insight in data where the clusters do not offer themselves up
easily.  \emph{Spectral Clustering} \cite{2004Dhillon} provides an answer to the curse of dimensionality
inherent in the clustering task formulation, by reducing dimensionality through the spectrum of the
similarity matrix of the data.  \emph{DBSCAN} \cite{1996Ester} is a density-based clustering algorithm 
which has won the SIGKDD test of time award in 2014.  Both Spectral Clustering and DBSCAN can find 
non-linearly separable clusters, which trips up naive clustering approaches; these algorithms deliver 
good results. In this paper, we propose a new clustering model which encompasses the strengths of both 
Spectral Clustering and DBSCAN; the combination can overcome some of the innate disadvantages of both 
individual methods.

\begin{figure}[t]
\centering
\includegraphics[width=.95\columnwidth]{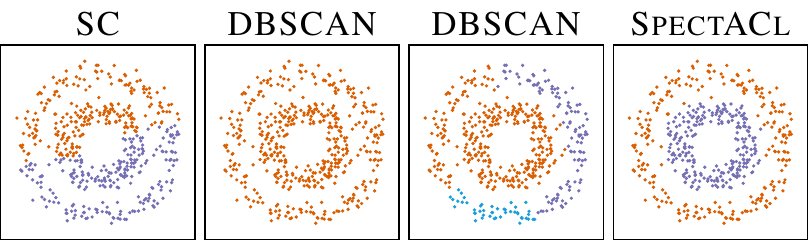}
\caption{Performance of Spectral Clustering, DBSCAN, and \textsc{SpectACl} on two concentric circles. Best viewed in color.}
\label{fig:intro}
\end{figure}

For all their strengths, even the most advanced clustering methods nowadays still can be tripped up by some pathological cases: datasets where the human observer immediately sees what is going on, but which prove to remain tricky for all state-of-the-art clustering algorithms.  One such example is the dataset illustrated in Figure \ref{fig:intro}: we will refer to it as the \emph{two circles} dataset.  It consists of two noisily\footnote{the scikit-learn clustering documentation (cf. \url{https://scikit-learn.org/stable/modules/clustering.html}) shows how \textsc{SC} and DBSCAN can succeed on this dataset, but that version contains barely any noise (scikit's noise parameter set to 0.05, instead of our still benign 0.1).} separated concentric circles, each encompassing the same number of observations.  As the leftmost plot in Figure \ref{fig:intro} shows, Spectral Clustering does not at all uncover the innate structure of the data: two clusters are discovered, but both incorporate about half of each circle.  It is well-known that Spectral Clustering is highly sensitive to noise \cite[Figure 1]{2017Bojchevski}; if
two densely connected communities are additionally connected to each other via a narrow bridge of only
a few observations, Spectral Clustering runs the risk of reporting these communities plus the bridge as
a single cluster, whereas two clusters plus a few noise observations (outliers) would be the desired 
outcome.
The middle plots in Figure \ref{fig:intro} shows how DBSCAN (with $minpts$ set to $25$ and $26$, respectively) fails to realize that there are two clusters. This is hardly surprising, since DBSCAN is known to struggle with several clusters of varying density, and that is exactly what we are dealing with here: since both circles consist of the same number of observations, the inner circle is substantially more dense than the outer circle.  The rightmost plot in Figure \ref{fig:intro} displays the result of the new clustering method that we introduce in this paper, \textsc{SpectACl} (\emph{Spectral Averagely-dense Clustering}): it accurately delivers the clustering that represents the underlying phenomena in the dataset.

\subsection{Main Contributions}

In this paper, we provide \textsc{SpectACl}, a new clustering method which combines the benefits of Spectral Clustering and
DBSCAN, while alleviating some of the innate disadvantages of each individual method.  
Our method finds clusters having a large average density, where the appropriate density for each cluster is automatically determined through the spectrum of the weighted adjacency matrix.
Hence, \textsc{SpectACl} does not suffer
from sensitivity to the $\text{minPts}$ parameter as DBSCAN does, and unlike DBSCAN it can natively handle several clusters with varying densities.  As in the Spectral Clustering pipeline, the final step of \textsc{SpectACl} is an embedding postprocessing step using $k$-means. However, unlike Spectral Clustering, we demonstrate the fundamental soundness of applying $k$-means to the embedding step in \textsc{SpectACl}: from \textsc{SpectACl}'s objective function we derive an upper bound by means of the eigenvector decomposition; we derive that the optimization of our upper bound is equal to $k$-means on the eigenvectors.  Our Python implementation, and the data generating and evaluation script, are publicly available\footnote{\url{https://sfb876.tu-dortmund.de/spectacl}}.
\section{A Short Story of Clustering}
A formal discussion of the state of clustering requires some notation. We assume that our data is given by the matrix $D\in\mathbb{R}^{m\times n}$. The data represents $m$ points $D_{j\cdot}$ for $1\leq j\leq m$ in the $n$-dimensional feature space. For every point, we denote with $\mathcal{N}_\epsilon(j)=\{l| \|D_{j\cdot}-D_{l\cdot}\|<\epsilon\}$ its $\epsilon$-neighborhood.

Given an arbitrary matrix $M$, we denote with $\|M\|$ its Frobenius norm. If $M$ is diagonalizable, then we denote its truncated eigendecomposition of rank $d$ as $M\approx V^{(d)}\Lambda^{(d)} {V^{(d)}}^\top$. The matrix $\Lambda$ is a $d\times d$ diagonal matrix, having the eigenvalues of $M$ on its diagonal, which have the $d$-largest absolute values: $|\lambda_1|=|\Lambda_{11}|\geq \ldots \geq |\Lambda_{dd}|$. The matrix $V^{(d)}$ concatenates the corresponding eigenvectors.

We write $\mathbf{1}$ to represent a matrix having all elements equal to 1. The dimension of such a matrix can always be inferred from the context. We also use the shorthand $\mathbb{1}^{m\times l}$ for the set of all binary partition matrices. That is, $M\in\mathbb{1}^{m\times l}$ if and only if $M\in\{0,1\}^{m\times l}$ and $|M_{j\cdot}|=1$ for all $j$. 
\subsection{$k$-means}
If there is one algorithm which comes to mind when thinking about clustering, it is likely the $k$-means algorithm \cite{1982Lloyd}. The popularity of $k$-means is due to its simple and effective yet theoretically founded optimization by alternating minimization, and the intuitively easily graspable notion of clusters by the within-cluster point scatter.  
The underlying assumption for this kind of clustering is that points in one cluster are similar to each other on average. As fallout, this imposes the undesired constraint that clusters are separated by a Voronoi tesselation and thus, have convex shapes.

We denote the objective of $k$-means in its matrix factorization variant:
\begin{align}\label{eq:kmeansNorm}
\min_{Y,X} \|D-YX^\top\|^2 \text{ s.t. } Y\in\mathbb{1}^{m\times r}, X\in\mathbb{R}^{n\times r}.
\end{align}
Here, the matrix $Y$ indicates the cluster memberships, i.e.,  $Y_{js}=1$ if point $j$ is assigned to cluster $s$ and $Y_{js}=0$ otherwise. The matrix $X$ represents the cluster centroids. The objective is nonconvex, but convex if one of the matrices is fixed. The optimum with respect to the matrix $X$ for fixed cluster assignments $Y$ is attained at $X=D^\top Y (Y^\top Y)^{-1}$. This is a matrix formulation of the simple notion that each cluster centroid is equal to the average of points in the cluster~\cite{2015Bauckhage}.

The objective of Equation~\eqref{eq:kmeansNorm} is equivalent to a trace maximization problem~\cite{2006Ding}, which derives from the definition of the Frobenius norm by means of the trace
\begin{align}\label{eq:kmeansTr}
    \max_Y \tr(Y^\top DD^\top Y(Y^\top Y)^{-1}) \text{ s.t. } Y\in\mathbb{1}^{m\times r}.
\end{align}
Note that this objective formulation of $k$-means depends only on the similarities of data points, expressed by the inner product of points. This is a stepping stone to the application of kernel methods and the derivation of nonconvex clusters.
\subsection{Graph Cuts}
The restriction to convex clusters can be circumvented by a transformation of the data points into a potentially higher-dimensional Hilbert space, reflected by a kernel matrix \cite{1998Schoelkopf}.
Such a representation based only on the similarity of points introduces an interpretation of the data as a graph, where each data point corresponds to a node, and the denoted similarities are the weights of the corresponding edges. In this view, a cluster is identifiable as a densely connected component which has only weak connections to points outside of the cluster. In other words, if we cut the edges connecting the cluster to its outside, we strive to cut as few edges as possible.  
This is known as the minimum cut problem. Given a possibly normalized edge weight matrix $W$, the cut is: 
\begin{align}\label{eq:MinCut}
    \text{cut}(Y;W) = \sum_{s=1}^r Y_{\cdot s}^\top W(\mathbf{1}-Y_{\cdot s}).
\end{align}
However, minimizing the cut often returns unfavorable, unbalanced clusterings, where some clusters encompass only a few or single points. Therefore, normalizing constraints have been introduced.
One of the most popular objective functions incorporating this normalization is the ratio cut~\cite{1992Hagen}. The objective of ratio cut is given as:
\begin{align}
\text{RCut}(Y;W) = \sum_{s=1}^r \frac{Y_{\cdot s}^\top W(\mathbf{1}-Y_{\cdot s})}{|Y_{\cdot s}|}.\label{eq:RCut}
\end{align}
Minimizing the ratio cut is equivalent to a trace maximization problem~\cite{2004Dhillon,2005Ding}, given as:
\begin{align}\label{eq:traceminCut}
 \max_{Y}&\ \tr(Z^\top (-L)Z) \\
 \text{ s.t. }& Z=Y(Y^\top Y)^{-1/2},Y\in\mathbb{1}^{m\times r}, \nonumber 
\end{align}
where $L=\diag(W\mathbf{1}) - W$ is known as the difference Laplacian~\cite{1997Chung}. Note that a normalization of $W$ would change the matrix $L$ to another famous graph Laplacian: the symmetric or random walk Laplacian~\cite{1997Chung}. 

The matrix $-L$ is negative semi-definite, having eigenvalues $0=\lambda_1\geq \ldots\geq \lambda_m$.
We observe that substituting the matrix $-L$ from Equation~\eqref{eq:traceminCut} with the positive semi-definite matrix $|\lambda_m|I-L$ does not change the objective. Therefore, ratio cut is equivalent to kernel $k$-means, employing the kernel matrix $|\lambda_m|I-L$.

\subsection{Spectral Clustering}
According to the Ky Fan theorem~\cite{1949Fan}, the maximum of Equation~\eqref{eq:traceminCut} with respect to the matrix $Z\in\mathbb{R}^{m\times r}$, having orthogonal columns, is attained at the eigenvectors to the $r$ largest eigenvalues of f $|\lambda_m|I-L$, i.e., $Z=V^{(r)}$. These relaxed results then need to be discretized in order to obtain a binary cluster indicator matrix $Y$. Most often, $k$-means clustering is applied for that purpose. 

The eigenvectors of eigenvalue zero from the Laplacian indicate connected components, which supports a theoretic justification of the embedding given by the eigenvectors $V^{(r)}$. Hence, Spectral Clustering is backed up by the thoroughly studied theory of graph Laplacians.
Spectral Clustering is in general described by the following three steps:
\begin{enumerate}
\item choose the representation of similarities by the weight matrix $W$, calculate the Laplacian $L=\diag(W\mathbf{1})-W$;
\item compute the truncated eigendecomposition of the translated Laplacian $|\lambda_m|I-L\approx V^{(r+1)}\Lambda^{(r+1)} {V^{(r+1)}}^\top$;
\item compute a $k$-means clustering on the eigenvectors $V^{(r+1)}$ (excluding the first eigenvector) returning $r$ clusters.
\end{enumerate}
The weighted adjacency matrix is usually generated by a $k$-nearest neighbor graph. Given $k$, one way to define the weighted adjacency matrix is $W=\nicefrac{1}{2}(A+A^\top)$, when $A$ is the adjacency matrix of the $k$-nearest neighbor graph. The other variant, which is less often used, is the determination of $W$ by the $\epsilon$-neighbor graph. In this variant, we have $W_{jl}=1$ if and only if $l\in\mathcal{N}_{\epsilon}(j)$ and $W_{jj}=0$.

It is advised to employ a normalization of the weighted adjacency matrix, e.g., the symmetric normalization $\tilde{W}=\diag(W\mathbf{1})^{-1/2}W\diag(W\mathbf{1})^{-1/2}$. The corresponding Laplacian is the symmetrically normalized Laplacian $L_{sym} = I - \tilde{W}$. For a more detailed discussion of Spectral Clustering methods, see~\cite{2007Luxburg}.

Spectral Clustering is a popular technique, since the truncated eigendecomposition is efficiently computable for large-scale sparse matrices~\cite{2011Saad}, and efficient to approximate for general matrices by randomized methods~\cite{2011Halko}. However, Spectral Clustering also has multiple shortcomings. One of the drawbacks of this pipeline of methods is the uncertainty about the optimized objective. Although there is a relation to the rational cut objective, as outlined, there are examples where the optimal solution of rational cut and the solution of Spectral Clustering diverge~\cite{1998Guattery}. Related to this problem is the somewhat arbitrary choice to employ $k$-means to binarize the relaxed result. In principle, also other discretization methods could and have been applied~\cite{2007Luxburg}, yet there is no theoretical justification for any of these. 
\subsection{The Robustness Issue}
Beyond theoretical concerns, an often reported issue with Spectral Clustering results is the noise sensitivity. This might be attributed to its relation with the minimum cut paradigm, where few edges suffice to let two clusters appear as one.   

The state-of-the-art in Spectral Clustering research includes two approaches aiming at increasing robustness towards noise. The one is to incorporate ideas from density-based clustering, such as requiring that every node in a cluster has a minimal node degree~\cite{2017Bojchevski}. Minimizing the cut subject to the minimum degree constraint resembles finding a connected component of core points, as performed by DBSCAN. The drawback of this approach is that it introduces a new parameter to Spectral Clustering, influencing the quality of the result.  
The other is to learn the graph representation simultaneously with the clustering~\cite{2017Bojchevski,2018Kang,2017Nie}. This is generally achieved by alternating updates of the clustering and the graph representation, requiring the computation of a truncated eigendecomposition in every iteration step. This results in higher computational costs.  A final alternative is the approach based on Dominant Sets, as outlined in \cite{2016Hou}.  This approach is a sequential method, iteratively deriving the single best cluster and reducing the dataset accordingly, unlike \textsc{SpectACl} which simultaneously optimizes all clusters.
Of these newer methods, we compare our method against the Robust Spectral Clustering algorithm~\cite{2017Bojchevski}, since it incorporates both the notion of density and the learning of the graph representation.
\section{Spectral Averagely-Dense Clustering}
We propose a cluster definition based on the average density (i.e.,\@ node degree) in the subgraph induced by the cluster. Let $W$ be the adjacency matrix. Since we are interested in deviations of the nodes' degree, we employ the $\epsilon$-neighborhood graph to determine $W$. We strive to solve the following problem, maximizing the \emph{average cluster density}:
\begin{align}\label{eq:avgDensObj}
\max_{Y\in \mathbb{1}^{m\times r}}\tr(Y^\top WY (Y^\top Y)^{-1})=\sum_s \delta(Y_{\cdot s},W).
\end{align}
The objective function returns the sum of average node degrees $\delta(Y_{\cdot s},W)$ in the subgraph induced by cluster $s$, i.e.:
\begin{align}\label{eq:delta}
\delta(Y_{\cdot s},W) = \frac{Y_{\cdot s}^\top W Y_{\cdot s}}{\|Y_{\cdot s}\|^2} =\frac{1}{|Y_{\cdot s}|}\sum_{j:Y_{js}=1} W_{j\cdot}Y_{\cdot s}.
\end{align}  
Note that our Objective~\eqref{eq:avgDensObj} is equivalent to minimum cut if the matrix $W$ is normalized. In this case, the Laplacian is given as $L=I-\tilde{W}$, and subtracting the identity matrix from Equation~\eqref{eq:avgDensObj} does not change the objective.

We derive a new relationship for the solution of Objective~\eqref{eq:avgDensObj} and the spectrum of the adjacency matrix. Thereby, we establish a connection with the application of $k$-means to the spectral embedding. As a result, our method encompasses the same steps as Spectral Clustering, and hence it can be efficiently computed even for large scale data. 
\subsection{Dense Clusters and Projected Eigenvectors}
The function $\delta$ from Equation~\eqref{eq:delta} is also known as the Rayleigh quotient. The values of the Rayleigh quotient 
depend on spectral properties of the applied matrix. As such, the density $\delta(y,W)\in [\lambda_m,\lambda_1]$ is bounded by the smallest and largest eigenvalues of the matrix $W$~\cite{1978Collatz}. The extremal densities are attained at the corresponding eigenvectors. A simple calculation shows that the eigenvectors $V_{\cdot 1},\ldots,V_{\cdot d}$ to the $d$-largest eigenvalues span a space whose points have a minimum density $\delta(y,W)\geq\lambda_d$.   
Thus, the projection of $W$ onto the subspace spanned by the first eigenvectors reduces the dimensionality of the space in which we have to search for optimal clusters.

 This insight suggests a naive approach, where we approximate $W$ by its truncated eigendecomposition. We restate in this case the optimization problem to find averagely dense clusters from Equation~\eqref{eq:avgDensObj} as
\begin{align}\label{eq:approxTrunc}
\max_{Y\in\mathbb{1}^{m\times r}} \sum_s \frac{Y_{\cdot s}^\top V^{(d)}\Lambda^{(d)} {V^{(d)}}^\top Y_{\cdot s}}{|Y_{\cdot s}|}.
\end{align}
A comparison with the $k$-means objective from Equation~\eqref{eq:kmeansTr} shows that the objective above is a trace maximization problem which is equivalent to $k$-means clustering on the data matrix $U=V^{(r)}\left(\Lambda^{(r)}\right)^{1/2}$. 

Unfortunately, applying $k$-means to the matrix $U$ does not yield acceptable clusterings. The objective of $k$-means is nonconvex and has multiple local solutions. The number of local solutions increases with every eigenvector which we include in the eigendecomposition from $W$, due to the orthogonality of eigenvectors. That is, with every included eigenvector, we increase the dimension of the subspace, in which we search for suitable binary cluster indicator vectors. As a result, $k$-means returns clusterings whose objective function value approximates the global minimum, but which reflect seemingly haphazard groupings of the data points.

The eigenvectors are not only orthogonal, but also real-valued, having positive as well as negative values. The first eigenvectors have a high value of the density function $\delta$, but the mixture of signs in their entries makes an interpretation as cluster indicators difficult. If the eigenvectors would be nonnegative, then they would have an interpretation as fuzzy cluster indicators, where the magnitude of the entry $V_{jk}$ indicates the degree to which point $j$ is assigned to the fuzzy cluster $k$. Applying $k$-means to a set of highly dense fuzzy cluster indicators would furthermore reduce the space in which we search for possible cluster centers to the positive quadrant. One possibility is to approximate the matrix $W$ by a symmetric nonnegative matrix factorization, but this replaces the polynomially solvable eigendecomposition with an NP-hard problem~\cite{vavasis2009complexity}. 
We conduct another approach, showing that fuzzy cluster indicators are derivable from the eigenvalues by a simple projection.

\begin{observation}\label{thm:project}
Let $W$ be a symmetric real-valued matrix, and let $v$ be an eigenvector to the eigenvalue $\lambda$. Let $v=v^+-v^-$, with $v^+,v^-\in\mathbb{R}^m_+$ be the decomposition of $v$ into its positive and negative parts. The nonnegative vector $u =v^+ + v^-$ has a density 
\[\delta(u) \geq |\lambda|.\]
\end{observation}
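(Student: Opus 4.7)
The plan is to recognize that the vector $u = v^+ + v^-$ is precisely the entrywise absolute value $|v|$ of the eigenvector $v$, and then to exploit the fact that the weighted adjacency matrix $W$ is entrywise nonnegative (this assumption is implicit from the context, even though the statement only says ``symmetric real-valued''; in the paper $W$ comes from an $\epsilon$-neighborhood graph, so $W_{jl}\geq 0$). Once that nonnegativity is on the table, passing from $v$ to $|v|$ can only help the numerator of the Rayleigh quotient while leaving the denominator untouched.

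More concretely, the first step is the denominator: since $v^+$ and $v^-$ have disjoint supports, $\|u\|^2 = \|v^+\|^2 + \|v^-\|^2 = \|v\|^2$, so $\delta(u,W)$ differs from $\delta(v,W)$ only through the quadratic form $u^\top W u$ versus $v^\top W v$.

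The second step is the numerator. Expanding gives
\[
u^\top W u \;=\; \sum_{j,l} W_{jl}\, |v_j|\,|v_l|.
\]
Because $W_{jl}\geq 0$ and $|v_j|\,|v_l| \geq |v_j v_l| \geq \pm v_j v_l$, termwise comparison yields
\[
u^\top W u \;\geq\; \left|\sum_{j,l} W_{jl}\, v_j v_l\right| \;=\; |v^\top W v| \;=\; |\lambda|\,\|v\|^2.
\]
Combining this with $\|u\|^2=\|v\|^2$ delivers $\delta(u,W) = u^\top W u / \|u\|^2 \geq |\lambda|$, as claimed.

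The main obstacle, and essentially the only one, is making the nonnegativity of $W$ explicit; without it the triangle-inequality step fails, since for a general symmetric $W$ we could have $W_{jl} < 0$ and the sign-flipping trick would move in the wrong direction. Equivalently, the argument can be phrased as: the matrix $|W|$ (which equals $W$ here) dominates $\pm W$ in the positive semidefinite order on nonnegative vectors, so $\langle |v|, W |v|\rangle \geq |\langle v, Wv\rangle|$. Apart from this, everything else is a routine unpacking of the Rayleigh quotient.
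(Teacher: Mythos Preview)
Your proof is correct and follows essentially the same route as the paper: both recognize $u = |v|$, invoke the (tacit) nonnegativity of $W$, and apply the triangle inequality to pass from $v$ to $|v|$. The only cosmetic difference is that the paper first establishes the entrywise vector inequality $(Wu)_j \geq |\lambda|\,u_j$ and then contracts with $u^\top$, whereas you bound the quadratic form $u^\top W u$ directly; the underlying idea is identical.
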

\begin{proof}
Application of the triangle inequality shows that the entries of an eigenvector having a high absolute value, play an important role for achieving a high Rayleigh quotient. Let $v$ be an eigenvector to the eigenvalue $\lambda$ of $W$, then:
\begin{align}
|\lambda| |v_j| = \left|W_{j\cdot}v\right| = \left|\sum_{l}W_{jl}v_l\right| \leq 
\sum_{l}W_{jl}|v_l|. \label{eq:absEigen}
\end{align} 
Since $u_j=|v_j|$, from the inequality above follows that $Wu\geq\lambda u$. Multiplying the vector $u^\top$ from the left and dividing by $\|u\|^2$ yields the stated result.
\end{proof}
\begin{figure}[t]\centering
\includegraphics[width=.95\columnwidth]{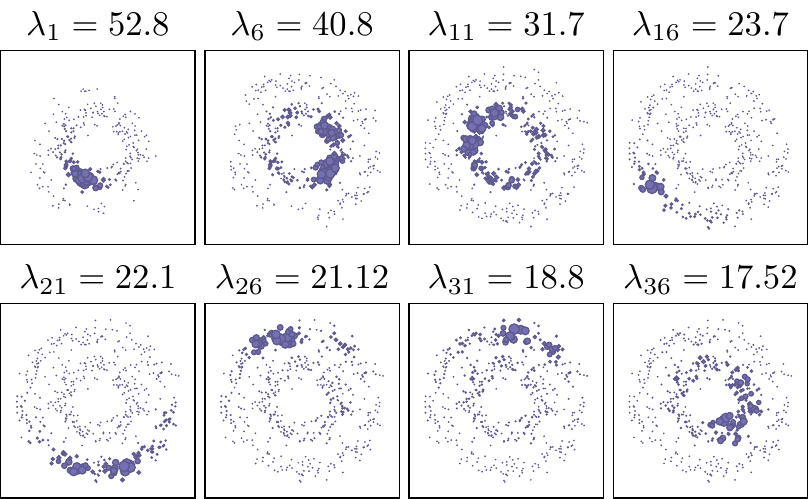}
\caption{Visualization of every fifth eigenvector for the two circles dataset. The size of each point is proportional to the absolute value of the corresponding entry in the eigenvector.}
\label{fig:binaryEmb}
\end{figure}
We refer to the eigenvectors, whose entries are replaced with their absolute values, i.e., $u_j=|v_j|$ for $1\leq j\leq m$, as \emph{projected eigenvectors}. 
The projected eigenvectors have locally similar values, resulting in the gradually changing shapes, illustrated on the two circles dataset in Figure~\ref{fig:binaryEmb}. We see that the projected eigenvectors have an interpretation as fuzzy cluster indicators: a strongly indicated dense part of one of the two clusters fades out along the circle trajectory. Furthermore, we see that the projected eigenvectors are not orthogonal, since some regions are repeatedly indicated over multiple eigenvectors. These multiple views on possible dense and fuzzy clusters might be beneficial in order to robustly identify the clustering structure.   

\begin{figure*}[t]
\centering
\includegraphics[width=.95\textwidth]{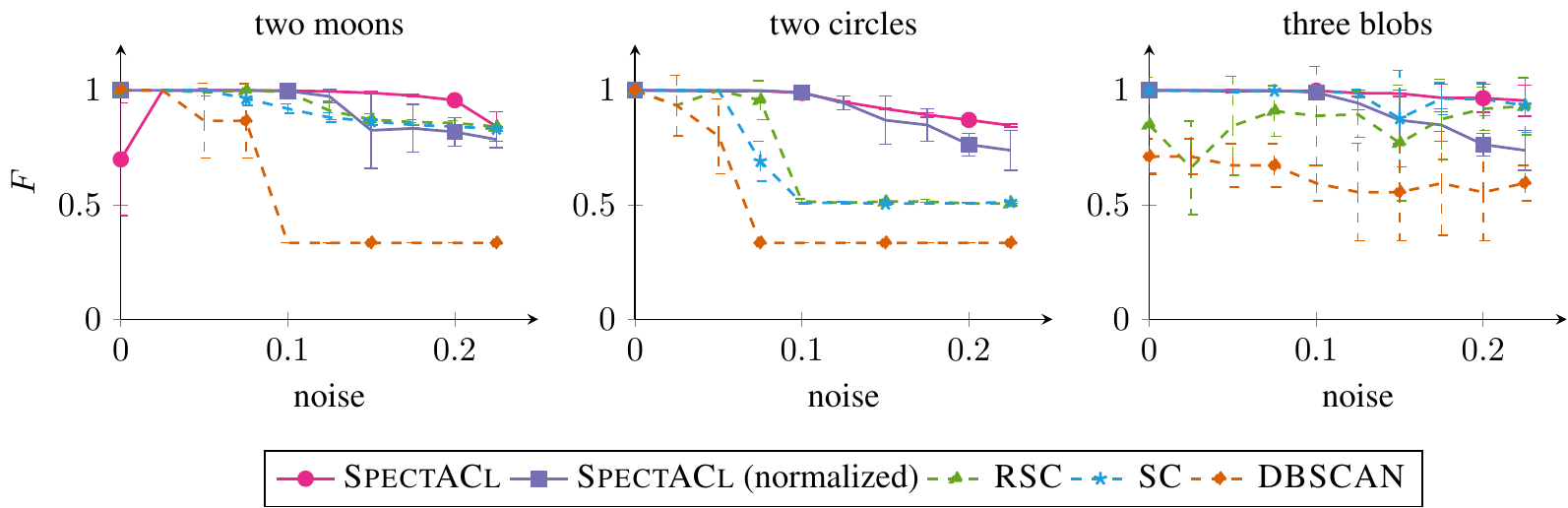}
\caption{Variation of noise, comparison of $F$-measures (the higher the better) for the two moons (left), the two circles (middle) and three blobs (right) datasets.}
\label{fig:noisePlot}
\end{figure*}
Applying $k$-means to the first $d$ projected eigenvectors yields a set of $r$ binary basis vectors, which indicate clusters in the subspace of points with an average density larger than $\lambda_d$. We summarize the resulting method \textsc{SpectACl} (Spectral Averagely-dense Clustering) with the following steps:
\begin{enumerate}
    \item compute the adjacency matrix $W$;
    \item compute the truncated eigendecomposition\\ $W\approx V^{(d)}\Lambda^{(d)}{V^{(d)}}^\top$;
    \item compute the projected embedding $U_{jk}=|V_{jk}^{(d)}||\lambda_k|^{1/2}$;
    \item compute a $k$-means clustering, finding $r$ clusters on the embedded data $U$. 
\end{enumerate}
These steps perform a minimization of an upper bound on the average cluster density function with respect to $W$ if $d=m$. In the general case $d\ll m$, we optimize an approximation of the objective function, replacing $W$ with the nonnegative product $UU^\top$. Certainly, a similar argument could be made for the pipeline of Spectral Clustering, considering the approximate objective~\eqref{eq:approxTrunc} for $d=r$. However, the difference is that the result of Spectral Clustering deteriorates with an increasing accuracy of the eigendecomposition approximation $(d\rightarrow m)$ due to orthogonality and negative entries in the eigenvectors as outlined above. We show in our experimental evaluation that the result of our method does in average not deteriorate with an increasing dimension of the embedding.

The matrix $W$ from the first step is here calculated by the $\epsilon$-neighborhood graph. However, our method is in principle applicable to any provided (weighted) adjacency matrix. We recall that the average density objective is equivalent to the ratio cut objective if we normalize the matrix $W$. We refer to this version as normalized \textsc{SpectACl}. In the normalized case, we compute the adjacency matrix according to the $k$-nearest neighbor graph, which is suggested for applications of Spectral Clustering.

Hence, our method only depends on the parameters $r$ and $\epsilon$, respectively $k$. How to determine the number of clusters, which is a general problem for clustering tasks, is beyond scope of this paper. We do provide a heuristic to determine $\epsilon$ and evaluate the sensitivity to the parameters $\epsilon$ and $k$. 
\section{Experimental Evaluation}
We conduct experiments on a series of synthetic datasets, exploring the ability to detect the ground truth clustering in the presence of noise. On real-world data, we compare the Normalized Mutual Information (NMI) of obtained models with respect to provided classes. A small qualitative evaluation is given by means of a handwriting dataset, illustrating the clusters obtained by \textsc{SpectACl}. 
\subsection{Synthetic Datasets}
We generate benchmark datasets, using the renowned scikit library. For each shape ---moons, circles, and blobs--- and noise specification we generate $m=1500$ data points. The noise is Gaussian, as provided by the scikit noise parameter; cf.\@ \url{http://scikit-learn.org}. Our experiments pit five competitors against each other: two new ones presented in this paper, and three baselines.  
We compare our method \textsc{SpectACl}, using the $\epsilon$-neighborhood graph represented by $W$ (denoted \textsc{SpectACl}) and the symmetrically normalized adjacency matrix $\diag(W_{knn}\mathbf{1})^{-1/2}W_{knn}\diag(W_{knn}\mathbf{1})^{-1/2}$ where $W_{knn}$ is the adjacency matrix to the $k$-nearest neighbor graph (denoted \textsc{SpectACl} (Normalized), or (N) as shorthand). The parameter $\epsilon$ is determined as the smallest radius such that $90\%$ of the points have at least ten neighbors and the number of nearest neighbors is set to $k=10$. We compare against the scikit implementations of DBSCAN (denoted DBSCAN) and symmetrically normalized Spectral Clustering (denoted \textsc{SC}). For DBSCAN, we use the same $\epsilon$ as for \textsc{SpectACl} and set the parameter $minPts=10$, which delivered the best performance on average. We also compare against the provided Python implementation of Robust Spectral Clustering~\cite{2017Bojchevski} (denoted \textsc{RSC}), where default values apply. Unless mentioned otherwise, our setting for the embedding dimensionality is $d=50$.

\subsubsection{Evaluation}
We assess the quality of computed clusterings by means of the ground truth. Given a clustering $Y\in\{0,1\}^{m\times r}$ and the ground truth model $Y^*\in\{0,1\}^{m\times r}$, we compute the $F$-measure. The bijection of matching clusters $\sigma$ is computed with the Hungarian algorithm~\cite{1955Kuhn}, maximizing $F=\sum_s F(s,\sigma(s))$, where
\[F(s,t) = 2\frac{\pre(s,t)\rec(s,t)}{\pre(s,t)+\rec(s,t)},\]
$$
\pre(s,t) = \frac{Y_{\cdot s}^\top Y^*_{\cdot t}}{|Y_{\cdot s}|} \quad\text{and}\quad \rec(s,t) = \frac{Y_{\cdot s}^\top Y^*_{\cdot t}}{|Y^*_{\cdot t}|}.
$$
These functions denote precision and recall, respectively. The $F$-measure takes values in $[0,1]$; the closer to one, the more similar are the computed clusterings to the ground truth. 
\subsubsection{Noise Sensitivity}
\begin{figure}[t]
\centering
\includegraphics[width=.95\columnwidth]{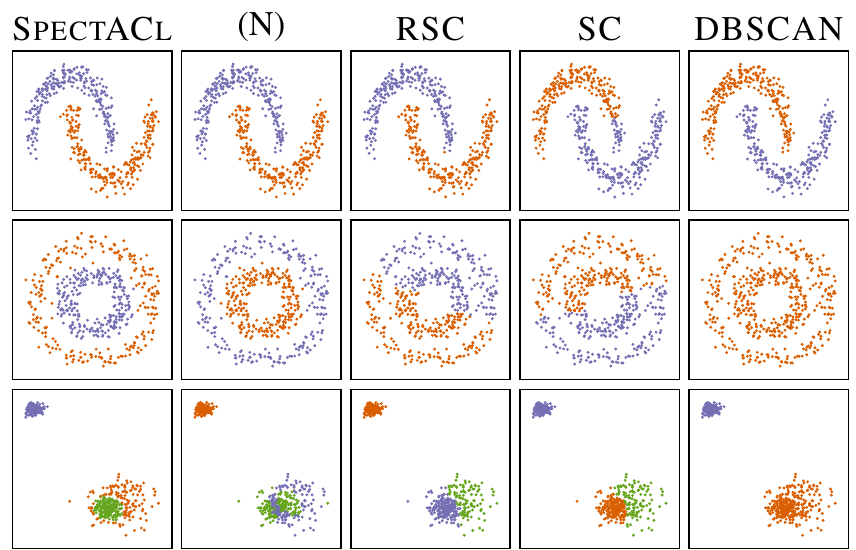}
\caption{Cluster visualizations of regarded algorithms and synthetic datasets. Best viewed in color.}
\label{fig:synthViz}
\end{figure}
In this series of experiments, we evaluate the robustness of considered algorithms against noise. For every noise and shape specification, we generate five datasets. A visualization of the clusterings for three generated datasets is given in Figure~\ref{fig:synthViz}, setting the noise parameter to $0.1$. Note that the depicted plot of the three blobs dataset shows a rare but interesting case, where two of the clusters are overlapping. In Figure~\ref{fig:noisePlot} we plot for every considered algorithm the averaged $F$-measures against the noise.  

From Figure~\ref{fig:noisePlot} we observe that \textsc{SpectACl} typically attains the highest $F$-value, showing also a very low variance in the results. The sole exception is the case of the two moons dataset when no noise is added (leftmost figure). A closer inspection of the embedding in this case shows, that \textsc{SpectACl} is tripped up by the absolute symmetry in the synthetic data. Since this is unlikely to occur in the real world, we do not further elaborate on this special case. 
The normalized version of \textsc{SpectACl} does not exhibit the artifact of symmetry, yet its results are less accurate when the noise is high ($>0.1$). Particularly interesting is the comparison of normalized \textsc{SpectACl} with \textsc{SC}, since both methods employ the same weighted adjacency matrix. We observe that both methods have a similar $F$-measure for the two moons as well as the three blobs data when the noise is at most $0.15$. For the three blobs data and a higher noise level, normalized \textsc{SpectACl} has a lower $F$-value than \textsc{SC}. However, remarkable is the difference at the two circles dataset, where  normalized \textsc{SpectACl} attains notably higher $F$-values than \textsc{SC}. As Figure \ref{fig:synthViz} illustrates, normalized \textsc{SpectACl} detects the trajectory of the two circles while \textsc{SC} cuts both circles in half. 

Comparing these results with Robust Spectral Clustering (\textsc{RSC}), learning the graph structure together with the Spectral Clustering does not drastically affect the result. Given the two moons or the two circles dataset, the $F$-measure of \textsc{RSC} is close to that of \textsc{SC}. The example clusterings in the top row of Figure~\ref{fig:synthViz} show that \textsc{RSC} is able to correct the vanilla Spectral Clustering, where a chunk from the wrong moon is added to the blue cluster. Yet, in the case of the two circles with different densities, both methods fail to recognize the circles (cf.\@ Figure~\ref{fig:synthViz}, middle row). Surprisingly, on the easiest-to-cluster dataset of the three blobs, \textsc{RSC} attains notably lower $F$-measures than Spectral Clustering.

The plots for DBSCAN present the difficulty to determine a suitable setting of parameters $\epsilon$ and $minpts$. The method to determine these parameters is suitable if the noise is low. However, beyond a threshold, DBSCAN decides to pack all points into one cluster. We also adapted the heuristic for the parameter selection, determining $\epsilon$ as the minimum radius such that $70-90\%$ of all points have at least $minPts=10$ neighbors. This does not fundamentally solve the problem: it merely relocates it to another noise threshold.
\subsubsection{Parameter Sensitivity}
We present effects of the parameter settings for \textsc{SpectACl}, namely the embedding dimension $d$, the neighborhood radius $\epsilon$ and the number of nearest neighbors $k$. We summarize the depiction of these experiments in Figure \ref{fig:paramPlot}. The plots on the top row show that the results of \textsc{SpectACl} are robust to the parameters determining the adjacency matrix. The fluctuations of the $F$-measure when $\epsilon>0.6$ are understandable, since the coordinates of points are limited to $[0,3]$. That is, $\epsilon=0.6$ roughly equates the diameter of the smaller blob on the top left in Figure~\ref{fig:synthViz} and is quite large. 
\begin{figure}[t]
\centering
\includegraphics[width=.95\columnwidth]{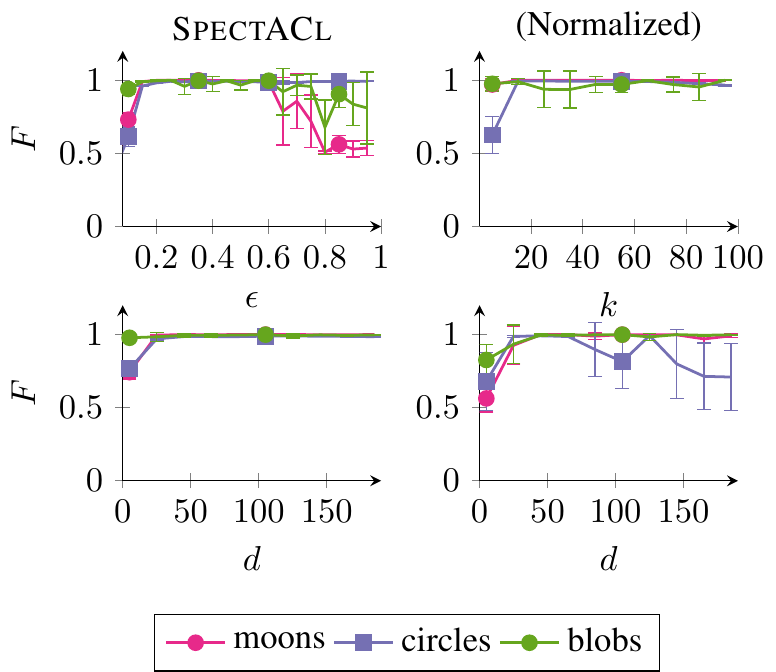}
\caption{Variation of latent parameters used for \textsc{SpectACl}. The neighborhood radius $\epsilon$ (top left) and the number of nearest neighbors $k$ (top right) determine the adjacency matrix for \textsc{SpectACl} and its normalized version. The parameter $d$ (bottom left and right) specifies the number of projected eigenvectors. We plot the $F$-measure (the higher the better) against the variation of the parameter.}
\label{fig:paramPlot}
\end{figure}
The plots on the bottom of Figure~\ref{fig:paramPlot} indicate that \textsc{SpectACl} in the unnormalized and normalized version generally only require the dimension of the embedding to be large enough. In this case, setting $d>25$ is favorably. The only exception is the two circles dataset when using normalized \textsc{SpectACl}, where the $F$-measure starts to fluctuate if $d>75$. A possible explanation for this behavior is that eigenvalues in the normalized case lie between zero and one. When we inspect the first $50$ projected eigenvectors of the normalized adjacency matrix, then we see that the last 19 eigenvectors do not reflect any sensible cluster structure, having eigenvalues in $[0.4,0.54]$. In unnormalized \textsc{SpectACl}, the unhelpful eigenvectors have a very low density and thus also a very low eigenvalue, being barely taken into account during the optimization.

\subsection{Real-World Datasets}
\begin{table}
	\centering
	\begin{tabular}{lrrrr}\toprule
	Data & m & n & r & $\mu\left(\frac{\mathcal{N}_\epsilon(j)}{m}\right)$ \\ \midrule
    Pulsar & 17\thinspace898 & 9 & 2 & $0.11\pm 0.08$  \\
    Sloan & 10\thinspace000 & 16 & 3 & $0.09\pm 0.07$\\
    MNIST & 60\thinspace000 & 784 & 10 & $0.05\pm 0.06$\\
    SNAP & 1\thinspace005 & -- & 42 & $0.05\pm 0.06$\\ \bottomrule
    \end{tabular}
\caption{Real-world dataset characteristics; number of samples $m$, features $n$, clusters (classes) $r$ and relative average number of neighbors in the $\epsilon$-neighborhood graph $\mu$.}
\label{tbl:realDescr}
\end{table}
We conduct experiments on selected real-world datasets, whose characteristics are summarized in Table~\ref{tbl:realDescr}. The Pulsar dataset\footnote{\url{https://www.kaggle.com/pavanraj159/predicting-pulsar-star-in-the-universe}} contains samples of Pulsar candidates, where the positive class of real Pulsar examples poses a minority against noise effects. The Sloan dataset\footnote{\url{https://www.kaggle.com/lucidlenn/sloan-digital-sky-survey}} comprises measurements of the Sloan Digital Sky Survey, where every observation belongs either to a star, a galaxy or a quasar. The MNIST dataset~\cite{1998Lecun} 
is a well-known collection of handwritten ciphers. The SNAP dataset refers to the Email EU core network data \cite{2014Leskovec}, 
which consists of an adjacency matrix (hence this dataset has no features in the traditional sense). For this dataset, we only compare the two versions of \textsc{SpectACl} and Spectral Clustering, since Robust Spectral Clustering and DBSCAN do not support a direct specification of the adjacency matrix.  
\begin{table}
	\centering
	\begin{tabular}{lrrrr}\toprule
	Algorithm & Pulsar & Sloan & MNIST & SNAP  \\\midrule
    \textsc{SpectACl} &	0.151 & 0.224 & 0.339 & 0.232\\
    \textsc{SpectACl}(N) & 0.005 & 0.071 & 0.756 & 0.104\\
    DBSCAN & 0.001 & 0.320 & 0.005 & --\\
    \textsc{SC} & 0.025 & 0.098 & 0.753 & 0.240\\
    \textsc{RSC} & 0.026 & 0.027 &0.740 & --\\ \bottomrule
    \end{tabular}
\caption{Normalized Mutual Information (NMI) score (the higher the better) for real-world datasets.}
\label{tbl:realNMI}
\end{table}
Table~\ref{tbl:realNMI} summarizes the normalized mutual information between the found clustering and the clustering which is defined by the classification. We observe that the unnormalized version of \textsc{SpectACl} obtains the highest NMI on the pulsar dataset, the second-highest on the Sloan dataset, a significantly lower NMI than \textsc{RSC}, \textsc{SC} and the normalized version on the MNIST sample and a similar NMI on the SNAP dataset. The Sloan and MNIST datasets pose interesting cases. The notoriously underperforming DBSCAN obtains the highest NMI at the Sloan dataset while obtaining the lowest $F$-measure of $0.09$, where \textsc{SpectACl} obtains the highest $F$-measure of $0.52$. The datasets Pulsar and Sloan are clustered in accordance with the classes when using the density-based approach, yet the MNIST clustering corresponds more with the clustering when using the minimum-cut associated approaches.

To understand why \textsc{SpectACl} fails to return clusters in accordance with the MNIST classes, we display some exemplary points for three \textsc{SpectACl} clusters in Figure~\ref{fig:mnist}. On the one hand, some identified clusters indeed contain only one cipher, e.g., there are clusters for zeros, ones and sixes exclusively. On the other hand, the clusters depicted in the figure contain a mixture of digits, where the digits are written in a cursive style or rather straight. Hence, instead of identifying the numbers, \textsc{SpectACl} identifies clusters of handwriting style, which is interesting information discovered from the dataset, albeit not the information the MNIST task asks for. To be fair, it is far from impossible that a similar rationale can be given for the clusters \textsc{SpectACl}'s competitors find on the Pulsar or Sloan dataset; we lack the astrophysical knowledge required to similarly assess those results.
\begin{figure}[t]
  \centering
  \includegraphics[width=.95\columnwidth]{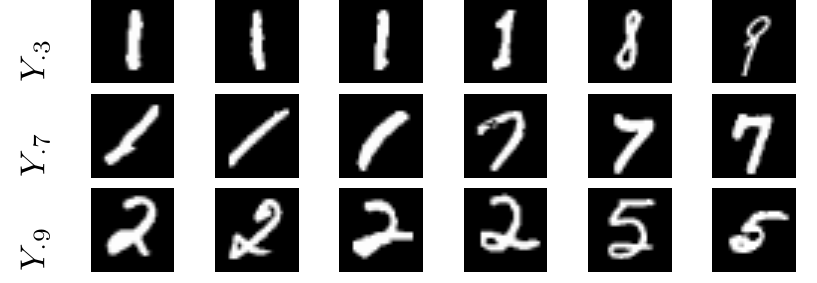}
  \caption{\textsc{SpectACl} clusters individual handwriting styles instead of cipher shapes.\label{fig:mnist}}
\end{figure}
\section{Conclusions}
We introduce \textsc{SpectACl} (\emph{Spectral Averagely-dense Clustering}): a new clustering method that combines benefits from Spectral Clustering and the density-based DBSCAN algorithm, while avoiding some of their drawbacks. 

By computing the spectrum of the weighted adjacency matrix, \textsc{SpectACl} automatically determines the appropriate density for each cluster.  This eliminates the specification of the $minpts$ parameter which is required in DBSCAN, and as we have seen in Figure \ref{fig:intro}, this specification is a serious hurdle for a DBSCAN user to overcome. On two concentric circles with the same number of observations (and hence with different densities), a DBSCAN run with $minpts=25$ lumps all observations into one big cluster.  When increasing $minpts$ by one, DBSCAN jumps to four clusters, none of which are appropriate. \textsc{SpectACl}, conversely, can natively handle these nonconvex clusters with varying densities.

In both Spectral Clustering and \textsc{SpectACl}, the final step is to postprocess intermediate results with $k$-means. Whether this choice is appropriate for Spectral Clustering remains open to speculation.  However, from the objective function of \textsc{SpectACl}, we derive an upper bound through the eigenvector decomposition, whose optimization we show to be equal to $k$-means optimization.  Hence, for \textsc{SpectACl}, we demonstrate that this choice for $k$-means postprocessing is mathematically fundamentally sound.

In comparative experiments, competing with DBSCAN, Spectral Clustering, and Robust Spectral Clustering, we find on synthetic data that the unnormalized version of \textsc{SpectACl} is the most robust to noise (cf.\@ Figure~\ref{fig:noisePlot}), and finds the appropriate cluster structure in three scenarios while the competitors all fail to do so at least once (cf.\@ Figure~\ref{fig:synthViz}).  On real-life data, \textsc{SpectACl} outperforms the competition on the inherently noisy Pulsar dataset and the Sloan dataset, it performs similarly to Spectral Clustering on the SNAP dataset, and it is defeated by both Spectral Clustering and Robust Spectral Clustering on the MNIST dataset (cf.\@ Table~\ref{tbl:realNMI}). The latter observation is explained when looking at the resulting clusters: as Figure~\ref{fig:mnist} illustrates, \textsc{SpectACl} focuses on clusters representing handwriting style rather than clusters representing ciphers, which is not unreasonable behavior for an unsupervised method such as clustering; this uncovered information is merely not reflected in the NMI scores displayed in the table.

Figure~\ref{fig:paramPlot} provides evidence that \textsc{SpectACl} is robust with respect to its parameter settings.  Hence, in addition to its solid foundation in theory and good empirical performance on data, \textsc{SpectACl} provides a clustering method that is easy to use in practice.  Hence, \textsc{SpectACl} is an ideal candidate for outreach beyond data mining experts.
\section{Acknowledgments}
Part of the work on this paper has been supported by Deutsche Forschungsgemeinschaft (DFG) within the Collaborative Research Center SFB 876 ``Providing Information by Resource-Constrained Analysis'', project C1 \url{http://sfb876.tu-dortmund.de}.

\bibliographystyle{aaai}

\begin{thebibliography}{}

\bibitem[\protect\citeauthoryear{Bauckhage}{2015}]{2015Bauckhage}
Bauckhage, C.
\newblock 2015.
\newblock K-means clustering is matrix factorization.
\newblock {\em arXiv preprint arXiv:1512.07548}.

\bibitem[\protect\citeauthoryear{Bojchevski, Matkovic, and
  G\"{u}nnemann}{2017}]{2017Bojchevski}
Bojchevski, A.; Matkovic, Y.; and G\"{u}nnemann, S.
\newblock 2017.
\newblock Robust spectral clustering for noisy data: Modeling sparse
  corruptions improves latent embeddings.
\newblock In {\em Proc.\@ KDD},  737--746.

\bibitem[\protect\citeauthoryear{Chung}{1997}]{1997Chung}
Chung, F.~R.
\newblock 1997.
\newblock {\em Spectral graph theory}.
\newblock Number~92. American Mathematical Soc.

\bibitem[\protect\citeauthoryear{Collatz}{1978}]{1978Collatz}
Collatz, L.
\newblock 1978.
\newblock Spektren periodischer graphen.
\newblock {\em Results in Mathematics} 1(1-2):42--53.

\bibitem[\protect\citeauthoryear{Dhillon, Guan, and Kulis}{2004}]{2004Dhillon}
Dhillon, I.~S.; Guan, Y.; and Kulis, B.
\newblock 2004.
\newblock Kernel k-means: Spectral clustering and normalized cuts.
\newblock In {\em Proc.\@ KDD},  551--556.

\bibitem[\protect\citeauthoryear{Ding \bgroup et al\mbox.\egroup
  }{2006}]{2006Ding}
Ding, C.; Li, T.; Peng, W.; and Park, H.
\newblock 2006.
\newblock Orthogonal nonnegative matrix t-factorizations for clustering.
\newblock In {\em Proc.\@ KDD},  126--135.

\bibitem[\protect\citeauthoryear{Ding, He, and Simon}{2005}]{2005Ding}
Ding, C.; He, X.; and Simon, H.~D.
\newblock 2005.
\newblock On the equivalence of nonnegative matrix factorization and spectral
  clustering.
\newblock In {\em Proc.\@ SDM},  606--610.

\bibitem[\protect\citeauthoryear{Driver and Kroeber}{1932}]{1932Driver}
Driver, H., and Kroeber, A.
\newblock 1932.
\newblock Quantitative expression of cultural relationships.
\newblock {\em University of California Publications in American Archaeology
  and Ethnology} 31:211--256.

\bibitem[\protect\citeauthoryear{Ester \bgroup et al\mbox.\egroup
  }{1996}]{1996Ester}
Ester, M.; Kriegel, H.-P.; Sander, J.; and Xu, X.
\newblock 1996.
\newblock A density-based algorithm for discovering clusters in large spatial
  databases with noise.
\newblock In {\em Proc.\@ KDD},  226--231.

\bibitem[\protect\citeauthoryear{Fan}{1949}]{1949Fan}
Fan, K.
\newblock 1949.
\newblock On a theorem of {W}eyl concerning eigenvalues of linear
  transformations i.
\newblock {\em Proceedings of the National Academy of Sciences}
  35(11):652--655.

\bibitem[\protect\citeauthoryear{Guattery and Miller}{1998}]{1998Guattery}
Guattery, S., and Miller, G.~L.
\newblock 1998.
\newblock On the quality of spectral separators.
\newblock {\em SIAM Journal on Matrix Analysis and Applications}
  19(3):701--719.

\bibitem[\protect\citeauthoryear{Hagen and Kahng}{1992}]{1992Hagen}
Hagen, L., and Kahng, A.~B.
\newblock 1992.
\newblock New spectral methods for ratio cut partitioning and clustering.
\newblock {\em IEEE transactions on computer-aided design of integrated
  circuits and systems} 11(9):1074--1085.

\bibitem[\protect\citeauthoryear{Halko, Martinsson, and
  Tropp}{2011}]{2011Halko}
Halko, N.; Martinsson, P.-G.; and Tropp, J.~A.
\newblock 2011.
\newblock Finding structure with randomness: Probabilistic algorithms for
  constructing approximate matrix decompositions.
\newblock {\em SIAM review} 53(2):217--288.

\bibitem[\protect\citeauthoryear{Hou, Gao, and Li}{2016}]{2016Hou}
Hou, J.; Gao, H.; and Li, X.
\newblock 2016.
\newblock D{S}ets-{DBSCAN}: A parameter-free clustering algorithm.
\newblock {\em IEEE Transactions on Image Processing} 25(7):3182--3193.

\bibitem[\protect\citeauthoryear{Kang \bgroup et al\mbox.\egroup
  }{2018}]{2018Kang}
Kang, Z.; Peng, C.; Cheng, Q.; and Xu, Z.
\newblock 2018.
\newblock Unified spectral clustering with optimal graph.
\newblock In {\em Proc.\@ AAAI}.

\bibitem[\protect\citeauthoryear{Klimek}{1935}]{1935Klimek}
Klimek, S.
\newblock 1935.
\newblock Culture element distributions: I, the structure of {C}alifornian
  {I}ndian culture.
\newblock {\em University of California Publications in American Archaeology
  and Ethnology} 37:1--70.

\bibitem[\protect\citeauthoryear{Kuhn}{1955}]{1955Kuhn}
Kuhn, H.~W.
\newblock 1955.
\newblock The {H}ungarian method for the assignment problem.
\newblock {\em Naval research logistics quarterly} 2(1-2):83--97.

\bibitem[\protect\citeauthoryear{Lecun \bgroup et al\mbox.\egroup
  }{1998}]{1998Lecun}
Lecun, Y.; Bottou, L.; Bengio, Y.; and Haffner, P.
\newblock 1998.
\newblock Gradient-based learning applied to document recognition.
\newblock {\em Proceedings of the IEEE} 86(11):2278--2324.

\bibitem[\protect\citeauthoryear{Leskovec and Krevl}{2014}]{2014Leskovec}
Leskovec, J., and Krevl, A.
\newblock 2014.
\newblock {SNAP Datasets}: {Stanford} large network dataset collection.
\newblock \url{http://snap.stanford.edu/data}.

\bibitem[\protect\citeauthoryear{Lloyd}{1982}]{1982Lloyd}
Lloyd, S.~P.
\newblock 1982.
\newblock Least squares quantization in {PCM}.
\newblock {\em IEEE Transactions on Information Theory} 28(2):129--137.

\bibitem[\protect\citeauthoryear{Nie \bgroup et al\mbox.\egroup
  }{2017}]{2017Nie}
Nie, F.; Wang, X.; Deng, C.; and Huang, H.
\newblock 2017.
\newblock Learning a structured optimal bipartite graph for co-clustering.
\newblock In {\em Advances in Neural Information Processing Systems},
  4132--4141.

\bibitem[\protect\citeauthoryear{Saad}{2011}]{2011Saad}
Saad, Y.
\newblock 2011.
\newblock {\em Numerical methods for large eigenvalue problems: revised
  edition}, volume~66.
\newblock SIAM.

\bibitem[\protect\citeauthoryear{Sch\"olkopf, Smola, and
  M\"uller}{1998}]{1998Schoelkopf}
Sch\"olkopf, B.; Smola, A.; and M\"uller, K.-R.
\newblock 1998.
\newblock Nonlinear component analysis as a kernel eigenvalue problem.
\newblock {\em Neural Computation} 10(5):1299--1319.

\bibitem[\protect\citeauthoryear{Tryon}{1939}]{1939Tryon}
Tryon, R.~C.
\newblock 1939.
\newblock {\em Cluster Analysis}.
\newblock Ann Arbor, Michigan: Edwards Brothers.

\bibitem[\protect\citeauthoryear{Vavasis}{2009}]{vavasis2009complexity}
Vavasis, S.~A.
\newblock 2009.
\newblock On the complexity of nonnegative matrix factorization.
\newblock {\em SIAM Journal on Optimization} 20(3):1364--1377.

\bibitem[\protect\citeauthoryear{Von~Luxburg}{2007}]{2007Luxburg}
Von~Luxburg, U.
\newblock 2007.
\newblock A tutorial on spectral clustering.
\newblock {\em Statistics and computing} 17(4):395--416.

\end{thebibliography}

\end{document}